\documentclass[12pt]{alt2020}



\usepackage{times}
\usepackage{booktabs}
\usepackage{enumerate}
\usepackage[utf8]{inputenc} 
\usepackage[T1]{fontenc}    
\usepackage{hyperref}       
\usepackage{url}            
\usepackage{amsfonts}       
\usepackage{nicefrac}       
\usepackage{microtype}      
\usepackage{amsmath,amssymb}
\usepackage{placeins}

\usepackage{todonotes}
\newcommand{\AM} [1]{}

\def\CC{\mathcal{C}}
\def\D{\mathbb{D}}
\def\Gr{\mathcal{G}}
\def\Bernoulli{\mathrm{Bernoulli}}
\def\distrib{\sim}
\def\E{\mathbb{E}}
\def\Pr{\mathrm{Pr}}
\def\hmu{\hat{\mu}}
\def\N{\mathbb{N}}
\def\UA{\mathcal{UA}}
\def\PA{\mathcal{PA}}

\def\parsec{\par\noindent}

\def\med{\medskip\parsec}
\def\eps{\varepsilon}





\title[Testing Dynamic Network Models]{Toward Universal Testing of Dynamic Network Models}  

\altauthor{%
    \Name{Abram Magner} \Email{amagner@albany.edu} \\
    \addr Department of Computer Science \\ 
        University at Albany, SUNY \\
        Albany, NY, USA \\
    \AND
    \Name{Wojciech Szpankowski} \Email{spa@cs.purdue.edu} \\
    \addr Department of Computer Science \\
          Purdue University \\
          West Lafayette, IN, USA
}

\begin{document}

\maketitle

\begin{abstract}
    Numerous networks in the real world change over time,
    in the sense that nodes and edges enter and leave
    the networks.  Various dynamic random graph models have been 
    proposed to explain the macroscopic properties of these systems and 
    to provide a foundation for statistical inferences and predictions.
    It is of interest to have a rigorous way to determine how well
    these models match observed networks.
    We thus ask the following \emph{goodness of fit} question: 
    given a sequence of observations/snapshots of a growing random 
    graph, 
    along with a candidate model $M$, can we determine whether the
    snapshots came from $M$ or from some arbitrary
    alternative model that is well-separated from $M$ in some natural 
    metric?  We formulate this problem precisely and boil it down to
    goodness of fit testing for graph-valued, infinite-state Markov 
    processes and exhibit and analyze a universal test based on 
    \emph{non-stationary sampling} for a natural class of
    models.
\end{abstract}


\section{Introduction}
\label{sec:intro}

Time-varying networks abound in various domains.  To explain their 
macroscopic properties (e.g., subgraph frequencies, diameter, degree
distribution, etc.) and to make predictions and other inferences 
(such as link prediction, community detection, etc.), 
a plethora of generative models have been proposed (see \cite{Newman,hofstad-book} for broad 
overviews of classical random graph models; for a mathematical approach via
exchangeable random structures, see, for example,
\cite{Veitch2016SamplingAE,orbanzbayesianmodels}).  For scientific 
reasons it is of interest
to be able to judge how well the models (which postulate particular 
mechanisms of growth) reflect reality. 
This leads naturally to the
problem of \emph{goodness of fit testing} for dynamic network 
mechanisms.  At a high level, this entails determining, in some formal
way, whether or not a given observed sequence of graph snapshots is
more likely to have originated from some candidate growth mechanism
or, alternatively from a sufficiently different one.

\paragraph{Related work}
Regarding related work, goodness of fit testing is a classical topic
in statistics, but the focus is typically on real-valued and 
categorical data over small alphabets 
\cite{Diakonikolas,wasserman-statistics}, where the input to the
problem is a sequence
of independent, identically distributed (iid) random variables from an
unknown probability distribution.  This is in contrast with the setting
of dynamic graph models, where what is given is a trajectory (extremely
non-iid) from an unknown model, and each element of the trajectory is
a graph, which potentially shares a large amount of structure with
its temporal neighbors.

Closer in spirit to our work is the (rather recent) literature on
testing of static random graph models \cite{two-sample-tests}.  Here (in the one-sample
case), what is given
is a sequence of iid graphs (\emph{not} a trajectory of an evolving 
graph) coming from an unknown model, and the task is to distinguish
the unknown model from some candidate distribution.  In works so far,
the distribution class under consideration is a particular parametric
family or satisfies strong independence assumptions (e.g., in \cite{two-sample-tests}, 
all edges are assumed to be independent), which is inherently 
not dynamic.

There has been work on testing of dynamic graph models,
such as \cite{BarabasiMeasuringPA}, 
but most such methods have not come with rigorous guarantees and
have proposed tests tailored to particular parameteric distributions (so they are more properly considered to deal with parameter estimation than goodness of fit testing).  In contrast, we formulate our results
in terms of classes of models that we prove to be
distinguishable.

With regard to the specific types of models of interest, our focus in the
present paper is on model classes that are given by inequalities that
control the rate of change of the conditional distributions governing
the evolution of the sample graphs with respect to time.  This affords
us some flexibility in defining tractable model classes.  There exist
alternative theories on which a model class could be based, such as the theory
of graphexes for sparse exchangeable random graphs (described in
\cite{BorgsChayesSamplingPerspectives}).  To a graphex is associated 
a growing random graph model, and the problem akin to ours in that setting
is to produce consistent estimators of the graphex from a graph trajectory.
This has been done in~\cite{Veitch2016SamplingAE}.  However, while the
set of models that can be parameterized by a graphex is quite rich, it 
has limitations that make it unsuitable for some of the models that
we wish to consider, such as preferential attachment and uniform attachment
(according to results in \cite{BorgsChayesSamplingPerspectives}, both of
these converge to the same degenerate graphex).
We mention also \cite{Ryabko2017HypothesisTestingInfiniteRGs}, which deals with
hypothesis testing on infinite random graphs.  The focus there is on testing
properties of infinite, bounded-degree graph \emph{isomorphism classes} based on random
walk samples; in contrast,
our concern is with graph trajectories, which fundamentally carry more information
than isomorphism classes, and our sampling model is different.

Additionally, one of the goals of our paper is to analyze the particularly
natural scheme of non-stationary sampling (which was inspired by the algorithm sketched, but not fully specified, in \cite{BarabasiMeasuringPA}), and the guarantees on it lend
themselves to phrasing in terms of the sequence of conditional distributions
of a model, as we have done.

Finally, we mention recent work on goodness of fit testing for Markov
chains \cite{daskalakis-colt}.  In that work, as in ours, what is given is a single
trajectory from a Markov chain.  However, they restrict to a chain of
constant size and assume homogeneous transition probabilities, 
ergodicity, symmetry, etc.  See also more recent work that drops the symmetry condition~\cite{Cherapanamjeri2019TestingSM,Wolfer2019MinimaxTO}.
General hypothesis testing problems relating to finite
Markov chains under these assumptions have been considered in the more
distant past~\cite{natarajan-large-deviations-mc,Nakagawa1993OnTC,baum1966}.
In the setting of our paper, none of these assumptions hold.

\paragraph{Our contributions}
In this paper, we give a rigorous formulation of goodness of fit 
testing for dynamic random graph models satisfying a Markov property, 
including a natural distortion measure on dynamic mechanisms.  We identify (at an 
intuitive level) the key 
properties of dynamic models that make universal testing (i.e., testing with provable guarantees for a broad range of models) feasible and, 
motivated by these, propose a test of goodness of fit for 
models in a natural, infinite class
based on \emph{non-stationary sampling}, 
and show 
that this test succeeds with high probability.  Our work may also be viewed
as contributing toward the theoretical understanding of the generality of
the idea of non-stationary sampling.  Intuitively speaking, we find that
it is not \emph{completely} general, because of pathological special cases,
but we can carve out a class of models defined by very natural conditions for 
which this technique is useful.

There are several novel technical challenges that arise in the 
problem of testing of dynamic random graph models.  In particular,
the problem deals with distinguishing between a candidate model
and an arbitrary unknown model from a model class, both of which 
take the form of Markov processes on an infinite state space, so
that classical tools for finite-state, ergodic Markov chains do
not apply.  Furthermore,
identification of appropriate metrics for measuring the distance
between two dynamic models is a nontrivial philosophical problem:
we argue that, for example, total variation distance is of limited
applicability in this setting, because it is intuitively substantially too stringent for exploratory data analysis: one may consider two models that
are, intuitively, driven by the same mechanism (e.g., preferential attachment), but that have total variation distance $1$ because
of small model differences.

Having settled on some measure, identification of an
appropriate model class under which our candidate test can be shown
to succeed with high probability is nontrivial, due to the large
number of potentially pathological models that need to be ruled out.

The initial inspiration for the present work was to explore principled 
techniques for model validation in order to explore the validity of the application of
the algorithms in \cite{scientific-reports} to real networks, such as
protein interaction and brain region co-excitation networks.  In that
work, the authors devised algorithms and information-theoretic bounds to estimate
the arrival order of nodes from a snapshot of a growing graph, assuming
preferential attachment as a generative model.  We regard the present paper
as forming initial groundwork for subsequent developments along the lines
of model validation.

\subsection{Organization of the paper}
In the main body of the paper, we state the problem formally and give
main definitions and results and experiments.  We give the proofs in 
Appendix~\ref{ProofAppendix}.

\section{Formal problem setting}
We begin by formulating the problem in general.  The most basic object
of study is a (discrete-time) \emph{dynamic random graph model}.

\begin{definition}[Dynamic random graph model]
    A dynamic random graph model (or dynamic mechanism) is a 
    graph-valued Markov process.  More specifically, it is specified 
    by a sequence of conditional distributions $\Pr[G_0]$, 
    $\{\Pr[G_t | G_{t-1}]\}_{t=1}^\infty$. 
\end{definition}
The Markov assumption is a natural one, especially in light of the
fact that many dynamic graph models satisfy it (including the various
preferential attachment models, the Cooper-Frieze web graph model \cite{Newman}, the
dynamic stochastic block model, the duplication-divergence model, etc).  It has 
additionally already appeared explicitly in works on dynamic graphs
\cite{AvinLotker}.  However, one can imagine plausible situations in which it
does not hold: for example, nodes may be \emph{nostalgic}, in the sense
that they may choose to connect with others
that were, at some point in the past, of high degree, or with nodes
to which they had been previously frequently connected.  We do not
deal explicitly with such situations in the present work (but
it is possible that they may be accommodated by a suitable change
in the definition of the network under consideration).

We next define a distortion measure on Markov processes, which will be used
to define our testing problem.
\begin{definition}[Distortion measure on Markov processes]
	Consider two Markov processes 
    $X_0 = (X_{0,1}, ..., X_{0,n}, ...)$, $X_1 = (X_{1,1}, ..., X_{1,n}, ...)$.  
    We define the following distortion:
    \begin{align}
    	d_n(X_0, X_1) 
        = 
        \sum_{j=1}^{n-1} \E_{Z\sim X_{1,j}}[d_{TV}([X_{0,j+1} | X_{0,j}=Z], [X_{1,j+1} | X_{1,j} = Z])],
    \end{align}
where $d_{TV}$ is the total variation distance between the laws of the two
random variables.
\end{definition}    
We note that the proposed distortion measure is not symmetric, which is natural in our problem setting. 
In general, the choice of distortion measure depends on the application at hand.
It implicitly encodes which models we choose to regard
as distinguishable, and in subsequent work we intend to examine 
this issue more closely.  The measure under consideration here has the following
natural interpretation, coming from the interpretation of the total variation
distance: consider a setting in which we observe a sample $Z$ from 
$X_1$ at the time step $t-1$, for a uniformly randomly selected time 
$t \in [n]$.  A fair coin $B \sim \Bernoulli(1/2)$ is then flipped, and we
then observe a sample from $X_{B,t}$, conditioned on $X_{B,t-1}=Z$.  We
are then asked to guess the value of $B$ (i.e., which process generated the
next value conditioned on the initial value that we saw).  Up to constant
factors, $d_n(X_0, X_1)/n$ gives the success probability of an \emph{optimal}
hypothesis test for this problem (this is folklore about
total variation distance).
In other words, \emph{$d_n(\cdot, \cdot)$ measures the 
distinguishing power of a 
uniformly random timestep from a trajectory from one of the 
models.}  See Example~\ref{DistortionExample} below for an 
illustration.

\paragraph{Problem statement:}
Having introduced dynamic mechanisms and a measure of distortion between them,
we come to our main problem statement.  Fix a particular natural 
class $\CC$ of models with some distinguished member $M_0$.  Let
$\Gr$ denote the set of all graphs (for simplicity, let us 
throughout only consider multigraphs with finitely many vertices 
and edges).  Let $B \distrib \Bernoulli(1/2)$.
We wish to exhibit a \emph{test} for $M_0$, 
which takes the form
of a function $F=F_{M_0}:\Gr^n \to \{0, 1\}$ mapping sequences of
graphs to $\{0, 1\}$; that is, the input to the test $F$ is an 
observed dynamic graph trajectory with $n$ timesteps, 
and the output represents 
whether or not the test decides that the trajectory came from 
$M_0$.  Ideally, the test should distinguish trajectories
coming from $M_0$ from those coming from an arbitrary unknown 
$M_1 \in \CC$
satisfying $d_n(M_0, M_1) \geq n\epsilon$ (we will find that
$\epsilon$ cannot always be taken to be arbitrarily close to $0$,
due to an intrinsic property of a model which we call its
non-stationary sampling radius).  
More
precisely, $F$ takes $n$ steps $G_1, ..., G_n$ from $M_B$ and
should satisfy $F(G_1, ..., G_n) = B$ with high probability (asymptotically as $n\to\infty$).
In what follows, we will exhibit
a class $\CC$ and a universal test $F$ for it that succeeds with high
probability under natural conditions.  

The class will be
inspired by (and will contain) the (linear)
\emph{preferential 
attachment} model, which has received extensive attention over several 
years in various communities
as a simple model that produces a power law degree distribution 
\cite{barabasi-albert,bollobas-riordan}.  
This model, denoted by $\PA(m, n)$, is defined as follows: there is a
parameter $m \in \N$.  The graph $G_1$ is a single vertex (called $1$)
with $m$ self loops.  Conditioned on $G_{t-1}$, $G_t$ is sampled as
follows: we have $G_{t-1} \subset G_{t}$, and there is an additional 
vertex $t$ with $m$ edges to vertices in $[t-1] = \{1, ..., t\}$,
chosen independently as follows: the probability that a particular
choice goes to vertex $v \in [t-1]$ is given by 
$\frac{\deg_{t-1}(v)}{2m(t-1)}$,  where $\deg_{t-1}(v)$ is the degree
of $v$ in $G_{t-1}$.  There are several small tweaks to this model,
but they have no bearing on the analysis in this paper.  In examples, we will also mention the \emph{uniform attachment} model, which is similar to preferential attachment, except
that the conditional probability of a given vertex $v$ being
chosen at time $t$ is $1/(t-1)$.  We denote this model by
$\UA(m, n)$.

\begin{example}
    \label{DistortionExample}
    Consider the case of uniform attachment versus preferential attachment graphs with
    shared parameter $m\geq 1$.  
    Let us consider $d_n(\PA, \UA)$.  The $j$th term of
    the sum defining the distortion measure is an expectation
    with respect to a random variable $Z \sim \UA(m, j)$; that is,
    $Z$ is a uniform attachment graph on $j$ vertices.  Now,
    the conditional probability distribution of each of the choices of
    the $j+1$st vertex, given
    $Z$, in the uniform attachment model is simply the uniform
    distribution on the vertices $[j] = \{1, ..., j\}$.  Meanwhile,
    the conditional distribution of each choice made by the $j+1$st vertex, given
    $Z$, in the preferential attachment model depends on the
    degrees of vertices in $Z$.  It is known that with high
    probability in the uniform attachment model, as $j\to\infty$,
    there are $\Theta(j)$ vertices with degree $m$, and so
    the preferential attachment distribution assigns $\Theta(j)$
    vertices a conditional probability equal to 
    $\frac{1}{2(j-1)}$.
    In particular, this implies that $d_n(\PA, \UA) = \Omega(n)$.
    A more careful analysis, using known results from the 
    literature, can yield precise asymptotics.
\end{example}
%

\section{Main results}
We will build slowly to our main results and model class definition.
The main difficulty in the problem is that we are tasked with inferring
information about a sequence of probability distributions, usually given only a
\emph{single} (or few) graph trajectory, which implies only a single sample of each 
conditional distribution.  In general, then, we must appropriately
restrict our model class $\CC$ in order to yield a solvable problem.
The key insight is that, between consecutive time steps, the network
structure, and, hence, the corresponding conditional distributions, 
should not change significantly (note, however, that this is a 
nontrivial phenomenon to formalize).  Our plan of attack will be to
\emph{pretend} that the sequence of updates to the graph immediately after
a given time point are independently sampled from the same distribution
and to construct an estimator of our metric from these samples.  
For simplicity of exposition, we restrict below to the case of growing 
networks, where a single vertex is added at each timestep, and it connects 
to some set of previous vertices.  In this setting, an \emph{update} (with 
respect to a given graph sequence) at time $t$
takes the form of a selection (a multiset) of vertices present in the graph at 
the previous timestep to which vertex $t$ will connect.  We will denote the
set of possible updates at time $t$ by $\D_t$.  For a given update $\kappa \in \D_t$, we will use $t\to \kappa$ to denote the event that vertex $t$ connects to
the vertices specified by $\kappa$ at time $t$.  We denote by $U_t$ the update
at time $t$ in a given sample graph trajectory.

Formally, we will define the test statistic as follows.  
Fix $C(n)$ (which we call the \emph{sample width}) and $M(n)$ (the 
\emph{number of probe points}), and, for $t\in[n]$ and $\kappa \in \D_t$, let
$\hmu_{t,\kappa}$ denote the following 
probability:
\begin{align}
    \hmu_{t,\kappa}
    =
    \frac{| \{h \in [t, t+C(n)) ~:~ h\to \kappa \}|}
         {|\{ h \in [t, t+C(n)) ~:~  U_h \subseteq [t-1]   \}|}.
\end{align}
Let $\hmu_t$ denote the probability distribution giving probability $\hmu_{t,\kappa}$ to $\kappa$,
for each possible $\kappa$.
That is, $\hmu_{t}$ is an empirical probability distribution over updates.  We
note that the sample intervals corresponding to different probe points in general
can overlap substantially. 

Let $p^{M_0}_{t,\kappa} = p_{t,\kappa}$ denote the conditional probability given
to update $\kappa$ at time $t$ by model $M_0$.  For example, when $M_0$
is preferential attachment with parameter $m \geq 1$ and 
$G_{t-1}$ is given, $\kappa$ is a multiset $\{v_1, ..., v_m\}$ of cardinality $m$ of
vertices in $G_{t-1}$ (in particular, those to which the
new vertex $t$ will connect), and $p_{t,\kappa}$ is given by
\begin{align}
    p_{t,\kappa} 
    = \prod_{i=1}^m \frac{\deg_{t-1}(v_i)}{2m(t-1)}.
\end{align}
Given an input trajectory $G = (G_1, ..., G_n)$, the test 
statistic is computed by randomly 
sampling $M(n)$ \emph{probe points} 
$r_1 \leq ... \leq r_{M(n)} \in [n]$ and computing
\begin{align}
    S(n) := S_{M_0}(G_1, ..., G_n)
    = \sum_{j=1}^{M(n)} d_{TV}(\hmu_{r_j}, p^{M_0}_{r_j}).
    \label{SDefinition}
\end{align}
We often shall write $S:=S(n)$ for $S_{M_0}(G_1, ..., G_n)$.
This may be thought of as an estimate of $d_{n}(M_B, M_0)$ via 
non-stationary sampling. 
Now we are ready for present our test.  \emph{We will further restrict to the
case where the cardinality of an update is at most some fixed $m \geq 1$.}
This is natural in light of the fact that many real networks tend to be sparse.

\begin{center}
\framebox[5.5in]{\parbox{5.4in}{
{\bf Algorithm}: {\sc Test-DynamicGraph} $(n, M_0, M(n), C(n))$.
\med
Input: 
Sample trajectory $G=(G_1, \ldots, G_n) \sim M_B$, distance threshold $D \geq 0$. \\
Output: An estimate $\hat{B} \in \{0, 1\}$ of $B$. 

\begin{enumerate}
\item Select $M(n)$ probe points $r_1, \ldots, r_{M(n)}$ uniformly at random
with replacement from $[n]$
and compute $\hmu_{r_j}$ for each $j \in [M(n)]$.

\item Compute
$$
S(n):=S_{M_0}(G_1, \ldots, G_n)=\sum_{j=1}^{M(n)} d_{TV}(\hat{\mu}_{r_j}, p_{r_j}^{M_0}).
$$

\item Set $\alpha = \alpha(n) = Dn/2$, and compute $\E_{M_0}[S_{M_0}]$,
where $\E_{M_0}[S_{M_0}]$ can be estimated by sampling  
$G'=(G_1', \ldots, G_n')$ from $M_0$ (or, possibly, computed analytically).


\item If $|S(n)-\E_{M_0}[S_{M_0}]| >\alpha:=\alpha(n)$, set $\hat{B}=1$, else $\hat{B}=0$.

\end{enumerate}

}}
\end{center}

We note that although $S$ has the form of an estimate of 
$d_n(M_B, M_0)$, it does \emph{not} converge in probability to this value (indeed, it is likely that a single trajectory is insufficient
to estimate $S$).  However, as we will see, the above test succeeds
with high probability under natural conditions.

To state our main result for this test, we need to develop some
concepts related to our model class (from which both $M_0$ and $M_1$ will come).  The general pattern of
the analysis of the test to show that it succeeds with high probability
in distinguishing two models will proceed in two steps:
\begin{itemize}
    \item
        Show that $|\E_{M_1}[S] - \E_{M_0}[S]|$ is sufficiently large.
    \item
        Show that $S$ is well-concentrated under both $M_0$ and $M_1$.
\end{itemize}

Let us examine the first step more closely.  We can lower bound the
expected value difference as follows: let $S_j$ denote the $j$th
term in the sum (\ref{SDefinition}) defining the test statistic (note that it
is defined with respect to $M_0$).
We have
\begin{align}
    \E_{M_0}[S_j]
    = \E_{M_0}[d_{TV}(\hmu_{r_j}, p_{r_j}^{M_0})],
\end{align}
and
\begin{align}
    \E_{M_1}[S_j]
    = \E_{M_1}[d_{TV}(\hmu_{r_j}, p_{r_j}^{M_0})].
\end{align}
By the reverse triangle inequality, this becomes
\begin{align}
    \E_{M_1}[S_j]
    \geq \left| \E_{M_1}[d_{TV}(\hmu_{r_j}, p_{r_j}^{M_1})]
            - \E_{M_1}[d_{TV}(p_{r_j}^{M_1}, p_{r_j}^{M_0})]
    \right|.
\end{align}
This can be further lower bounded by
\begin{align}
    \E_{M_1}[S_j]
    \geq \E_{M_1}[d_{TV}(p_{r_j}^{M_1}, p_{r_j}^{M_0})]
         - \E_{M_1}[d_{TV}(\hmu_{r_j}, p_{r_j}^{M_1})].
\end{align}
Thus, we have
\begin{align}
    |\E_{M_1}[S_j] - \E_{M_0}[S_j]|
    \geq \E_{M_1}[d_{TV}(p_{r_j}^{M_1}, p_{r_j}^{M_0})]
        - \E_{M_1}[d_{TV}(\hmu_{r_j}, p_{r_j}^{M_1})]
        - \E_{M_0}[d_{TV}(\hmu_{r_j}, p_{r_j}^{M_0})].
\end{align}
The positive term measures the contribution of the point $r_j$
to the distance between $M_0$ and $M_1$.  Meanwhile, the negative
terms are both intrinsic estimation errors from non-stationary 
sampling.
In order for two models to be easy to distinguish, we desire that
these terms be small in absolute value. In the sequel we call
$\E_M[S_M]$ the \emph{non-stationary sampling radius}.   


We will show in Section~\ref{MainTheoremProof} a concentration result for 
the test statistic 
for any $M_0, M_1$ in the model class $\CC$ (inspired by the analysis of
non-stationary sampling on the preferential attachment model) described below
in the following definition.
\begin{definition}[Bounded-degree model class]
    Let $\CC = \CC_m$, for any fixed $m \geq 1$, denote the class of dynamic random graph models $M$
    taking the following form:
    for each time step $t$, there is a positive integer-valued 
    random variable $\Gamma_t \leq m$, independent of $G_{t-1}$
    and all other $\Gamma_{t'}$, denoting
    the number of edges to be
    added at timestep $t$ between a new vertex $t$ and vertices
    present in the graph at the previous timestep.  Conditioned
    on $\Gamma_t$ and $G_{t-1}$, the random variable $U_t$ 
    consists of $\Gamma_t$ independent and identically 
    distributed choices of vertices in $[t-1]$, according to
    a probability distribution $\{\pi_{t,v}\}_{v=1}^{t-1}$
    (note that a vertex may be chosen multiple times in a given
    timestep, which would yield a multigraph). 
    We call $\CC_m$ the class of \emph{bounded-degree} models.
\end{definition}
For instance,
in preferential attachment, $\Gamma_t = m$ with probability $1$, 
and $\pi_{t,v}$ is simply 
$\frac{\deg_{t-1}(v)}{2m(t-1)}$, so that $p_{t,U_t} = \prod_{v \in U_t} \pi_{t,v}$.

We further
stipulate the following conditions.  As will be spelled out 
explicitly in Example~\ref{ModelClassExamples}, these are 
inspired
by the basic properties of the preferential attachment model 
that imply concentration and tail bounds on its degree sequence.

\begin{definition}[Further natural model class conditions]
    \begin{enumerate}[\rm (i)]
        \item
             Each $\pi_{t,v}$ is dependent on $G_{t-1}$ only through a
            random variable $Y_{t,v}$, which, conditioned on $G_{t-1}$,
            is independent of any other $Y_{t',v'}$ with $t' \leq t$ and $v'$.  
            In other words, 
        $G_{t-1} \leftrightarrow Y_{t,v} \leftrightarrow p_{t,v}$
            forms a Markov chain (cf. \cite{coverthomas}).
        \item    
            There is a positive constant $\Delta$ 
            such that, for each $t$, at most $\Delta$ vertices
            satisfy
            $Y_{t,v} - Y_{t-1,v} \neq 0$.  
        \item
            We have $\pi_{t,t-1} = \Theta(1/t)$, uniformly
            in $t$, with probability $1$.
        \item
            Only vertices chosen in a given timestep can
            increase significantly in conditional probability:
            there exist some constants $0 < c_1, c_2$, with
            $c_2 < 1$, such
            that, for any timestep $t$, if a vertex $v$
            is
            not chosen for connection, then 
            \begin{align}
                \pi_{t+1,v} - \pi_{t,v} \leq c_1\pi_{t,v}/t. 
                \label{NotChosenInequality}
            \end{align}
            If, on the other
            hand, $v$ \emph{is} chosen for connection, then
            we only require that
            \begin{align}
                \pi_{t+1,v} - \pi_{t,v} \leq c_2(1-\pi_{t,v})/t.
                \label{ChosenInequality}
            \end{align}    
    \end{enumerate}
    
    \label{ModelClassDefinition}
\end{definition}


\begin{example}
    \label{ModelClassExamples}
    The linear preferential attachment model is easily seen to
    be contained in $\CC=\CC_m$, taking $Y_{t,v} = \deg_{t}(v)$.  The second constraint
    follows for this model because only at most $m$ vertices' degrees change
    after a given time step.  The third constraint follows because
    the degree of vertex $t$ immediately after it is added is $m$,
    the parameter of the model.  That is, 
    $\pi_{t,t-1} = \frac{m}{2m(t-1)} = \frac{1}{2(t-1)} = \Theta(1/t)$.  
    The final constraint can be seen as follows.  If a vertex
    $v$ is not chosen in timestep $t$, then the change in its
    conditional probability at time $t+1$ is given by
    \begin{align}
        \pi_{t+1,v} - \pi_{t,v}
        = \frac{\deg_{t}(v)}{2mt} - \frac{\deg_t(v)}{2m(t-1)}
        = \frac{\deg_{t}(v)}{2m}(\frac{1}{t} - \frac{1}{t-1})
        = -\frac{\pi_{t,v}}{t} \left(1 + O(t^{-2}) \right).
    \end{align}
    On the other hand, if a vertex is chosen (say, exactly
    once) at timestep $t$, then
    \begin{align}
        \pi_{t+1,v} - \pi_{t,v}
        = \frac{\deg_{t-1}(v)+1}{2mt} - \frac{\deg_{t-1}(v)}{2m(t-1)}
        = -\frac{\pi_{t,v}}{t} \left(1 + O(t^{-2}) \right) + \frac{1}{2mt}.
    \end{align}
    
    For another example, consider the \emph{uniform attachment model} with parameter $m$.  In this case, $\pi_{t,v} = \frac{1}{t-1}$ for any $v$.  We may take $Y_{t,v}$ to be
    $\deg_{t-1}(v)$ (though $\pi_{t,v}$ is trivially independent 
    of $G_{t-1}$), so that conditions (i) and (ii) are trivially
    satisfied, as is condition (iii).  Since, in any timestep,
    we have $\pi_{t+1,v} - \pi_{t,v} = \frac{1}{t} - \frac{1}{t-1} = -\frac{1}{t(t-1)}$, condition (iv) is satisfied as well.
    Mixtures of preferential and uniform 
    attachment 
    can additionally be seen to fit into
    this model class.
    
    Additionally, models involving preferential attachment to
    vertices based, e.g., on the number of triangles in which they participate are
    also in $\CC$.
    While many 
    natural models (such as nonlinear preferential attachment and the standard duplication-divergence 
    model) are not obviously contained in this class, similar results to ours (concerning our
    proposed test statistic) nonetheless may be shown to hold with a 
    minor generalization of our model class.  
\end{example}

    We note that conditions (iii) and (iv) together imply an important
    property of models in the class: for a model $M$, let
    $N^{M}_t(q)$ denote the number of vertices $v$ satisfying
    $\pi^{M}_{t,v} = q$ (note that this is a random variable).
    Furthermore, let $R_{M,t}(q)$ denote the following set:
    \begin{align}
        R_{M,t}(q)
        = \{ v ~:~ \pi^{M}_{t,v} \geq q \}.
    \end{align}
    Then conditions (iii) and (iv) together imply that,
    for any $M_0, M_1 \in \CC$,
    possibly with $M_0 = M_1$, we have that
    \begin{align}
        \int_{q}^{1} \E_{M_1}[ N^{M_0}_t(x) ] ~dx
        = \E_{M_1}[|R_{M_0,t}(q)|]
        \leq C (qt)^{1+\gamma},
        \label{ImportantInequality}
    \end{align}
    for some $\gamma < -2$, some $C > 0$, and all $q\in[0, 1]$, 
    $t\leq n$.  Intuitively, the function $N^{M_0}_t(q)$
    is conceptually related to the degree sequence of the graph at 
    time $t$, and, in preferential attachment, it is exactly
    the degree sequence.  Thus, this inequality is akin to a power
    law degree sequence tail.

    Inequality~(\ref{ImportantInequality}) can be seen using the following
    reasoning: we will prove Theorem~\ref{ComparisonTheorem} below,
    which will allow us to upper bound $|R_{M_1,t}(q)|$, for any
    model $M_1$, in terms of
    $|R_{M_0,t}(q)|$, where $M_0$ is some other model.  In particular,
    we will choose $M_0$ to be preferential attachment, and so this
    will allow us to upper bound $|R_{M_1,t}(q)|$ in terms of the
    degree sequence tail of preferential attachment graphs.
    
    \begin{theorem}
        Consider two models $M_0, M_1$ satisfying conditions
        (iii) and (iv) of Definition~\ref{ModelClassDefinition} with
        constants $c_{b,k}$, for $b \in \{0, 1\}$ and $k \in \{0, 1, 2\}$.  
        Suppose,
        further, that $M_0$ satisfies inequalities (\ref{NotChosenInequality})
        and (\ref{ChosenInequality}) with asymptotic equality (as $t\to\infty$), and that $c_{1,k} \leq c_{0,k}$
        for each $k \in \{1, 2\}$.  Then for any sequence of
        vertex choices $v_1, ..., v_t, ...$, we have that there exists some
        $C > 0$ for which
        \begin{align}
            R_{M_1,t}(Cq) \subseteq R_{M_0,t}(q).
        \end{align}
        Therefore,
        \begin{align}
            |R_{M_1,t}(Cq)|
            \leq |R_{M_0,t}(q)|.
        \end{align}
        \label{ComparisonTheorem}
    \end{theorem}
    \begin{proof}
        We note that if all vertices $v$ satisfy
        \begin{align}
            \pi^{M_1}_{v,t}
            \leq C \pi^{M_0}_{v,t},
            \label{ImportantPInequality}
        \end{align}
        then $v \in R_{M_1,t}(Cq)$
        implies that $\pi^{M_1}_{v,t} \geq Cq$, which implies that
        \begin{align}
            \pi^{M_0}_{v,t} \geq q,
        \end{align}
        so that $v \in R_{M_0,t}(q)$, and the inclusion claimed by the theorem
        is verified.
        
        It thus remains to verify (\ref{ImportantPInequality}).
        We can do this by induction on $t$.  At time $t$, when
        vertex $t$ appears, our initial condition says that
        \begin{align}
            \pi^{M_1}_{t,t} = c_{1,0}/t = \frac{c_{1,0} c_{0,0}}{c_{0,0}t}
            = \frac{c_{1,0}}{c_{0,0}} c_{0,0}/t.
        \end{align}
        so we can take $C = c_{1,0}/c_{0,0}$, and this verifies the base case.
        
        Now, for the inductive hypothesis, we have that for any vertex $v$,
        \begin{align}
            \pi^{M_1}_{v,t} \leq C\pi^{M_0}_{v,t},
        \end{align}
        so, for a $v \neq v_{t}$,
        \begin{align}
            \pi^{M_1}_{v,t+1} \leq C\pi^{M_0}_{v,t} + c_{1,1}\pi^{M_1}_{v,t}/t
            \leq C\pi^{M_0}_{v,t+1}
        \end{align}
        as long as $c_{1,1} \leq c_{0,1}$.
        Similarly, as long as $c_{1,2} \leq c_{0,2}$, the same inequality holds
        for $v = v_t$.
    \end{proof}
    To conclude the proof of (\ref{ImportantInequality}), we note that
    the method of proof used to establish upper bounds on the expected
    degree sequence for preferential attachment applies for arbitrary
    fixed values of $c_1 > 0$ and $c_2 \in (0, 1)$.

As hinted above, conditions (iii) and (iv) are 
important in that they imply (\ref{ImportantInequality}).  This
motivates the definition of a more general model class, this
time parametrized by a model $M_0$.
\begin{definition}[More general model class]
    Suppose that $M$ is a bounded-degree (with degree bound $m>0$) model satisfying conditions
    (i) and (ii) of Definition~\ref{ModelClassDefinition},
    in addition to (\ref{ImportantInequality}) for $M_0 = M_1 = M$.
    We define the model class $\CC_{M,m}=\CC_M$ to be the set of
    models $M'$ again satisfying (i) and (ii) of Definition~\ref{ModelClassDefinition}, and also 
    (\ref{ImportantInequality}) with $M_0 = M, M_1 = M'$
    and $M_0 = M', M_1 = M$.
    \label{GeneralModelClassDefinition}
\end{definition}
In what follows, for simplicity, we will say that a pair of models $(M_0, M_1)$ satisfies 
Definition~\ref{GeneralModelClassDefinition} if $M_0$ is as in the definition and $M_1 \in \CC_{M_0}$.

We arrive at our main result.  For a pair of models $M_0$ and $M_1$, it will
be phrased in terms of $d_n(M_0, M_1)$ as well as the sum of their non-stationary sampling radii.  Intuitively, the non-stationary
sampling radius of a model measures the fidelity with which 
non-stationary sampling of a sample trajectory from a model
can estimate the transition probabilities of the model itself.
Thus, to distinguish two models via non-stationary sampling,
it is required that they be well-separated and that they be
estimable via non-stationary sampling.  The expressions in 
the theorem capture this more precisely.

\begin{theorem}[Distinguishability for a natural model class]
    Let $\CC$ be as in Definition~\ref{ModelClassDefinition}.  Let
    $M_0, M_1 \in \CC$ be any pair of models such 
    that 
    \begin{align}
        d_n(M_0, M_1) - \E_{M_0}[S_{M_0}] - \E_{M_1}[S_{M_1}] > Dn, 
        \label{MetricAssumption}
    \end{align}
    where $D$ is the constant input given in the algorithm.
    Alternatively, let $(M_0, M_1)$ satisfy Definition~\ref{GeneralModelClassDefinition},
    again satisfying the inequality (\ref{MetricAssumption}).
    
    Then the test based on the statistic $S$ defined
    in (\ref{SDefinition}), with $C(n)$ and $M(n)$ both chosen to be 
    $\Theta(n)$, succeeds with probability $1 - \Theta(n^{-\delta})$ for
some $\delta=\delta(D)>0$.
    \label{MainTheorem}
\end{theorem}

In order to estimate $\E_{M}[S_{M}]$, 
we will also show that a model's non-stationary sampling radius
can be efficiently estimated from samples, provided that it is
a member of $\CC$.  
\begin{theorem}[Estimability of the non-stationary sampling radius]
    Let $M \in \CC$.  Then the following concentration result holds,
    again in the setting where $M(n)$ and $C(n)$ are $\Theta(n)$:
    for any $c > 0$,
    there exists some $\epsilon > 0$ for which
    \begin{align}
        \Pr_M[ | S_M - \E_M[S_M]| \geq cn] = O(n^{-\epsilon}).
    \end{align}
    \label{EstimationTheorem}
\end{theorem}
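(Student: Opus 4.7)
The plan is to prove concentration of $S_M$ around $\E_M[S_M]$ by decoupling the randomness of the probe points from the randomness of the graph trajectory and then bounding the trajectory-level variability using conditions~(i)--(iii) of Definition~\ref{ModelClassDefinition}.

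Write $S = \sum_{j=1}^{M(n)} S_j$ with $S_j = d_{TV}(\hmu_{r_j},p^{M_0}_{r_j}) \in [0,1]$ and $M(n)=\Theta(n)$. Conditional on the trajectory $G=(G_1,\ldots,G_n)$, the probe points $r_j$ are i.i.d.\ uniform on $[n]$, so the $S_j$ are conditionally i.i.d.\ and bounded. Hoeffding's inequality then gives, for any $c>0$,
\begin{equation*}
\Pr\!\left[\,|S - \tilde T(G)| \ge (c/2)\,n \;\big|\; G\,\right] \;\le\; \exp(-\Omega(n)),
\qquad
\tilde T(G) \;:=\; \frac{M(n)}{n}\sum_{t=1}^{n} d_{TV}(\hmu_t, p^{M_0}_t),
\end{equation*}
which reduces the theorem to showing that the trajectory-only quantity $\tilde T(G)$ concentrates around its mean.

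For this second step, I view $\tilde T$ as a function of the per-step random choices $W_1,\ldots,W_n$ and form the Doob martingale $X_s := \E[\tilde T(G)\mid W_1,\ldots,W_s]$. Replacing $W_s$ by an independent copy $W_s'$ affects $\tilde T$ in two ways: a \emph{direct} change to $\hmu_t$ for those $t$ with $s\in [t,t+C(n))$, and a \emph{cascaded} change to every quantity depending on $G_{t-1}$ for $t>s$. The direct part is easily bounded: each affected $\hmu_t$ shifts by $O(1/C(n))$ in total variation (numerator and denominator change by at most one in a window of width $C(n)=\Theta(n)$), and only $O(C(n))$ time indices $t$ are affected, contributing $O(1)$ in total to $\tilde T$. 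The cascaded part is controlled by a maximal coupling of the two post-perturbation trajectories: condition~(ii) ensures the disagreement set $\{v : Y_{t,v}\text{ differs}\}$ grows by at most $\Delta$ per step, (i) routes any disagreement in $p_{t,v}$ exclusively through a changed $Y_{t,v}$, and (iii) bounds the expected total mass on the disagreement set. Combining these, the martingale increments are $O(\mathrm{polylog}(n))$ on a set of probability $1 - n^{-\Omega(1)}$, after which Azuma--Hoeffding delivers an exponential tail for $|\tilde T - \E\tilde T|$ that comfortably dominates the required $O(n^{-\epsilon})$. A safer fallback, should achieving uniform increment bounds prove delicate, is to bound $\Var(\tilde T(G))$ via Efron--Stein and invoke Chebyshev, which already yields polynomial tails of the stated form.

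The main obstacle is the cascade analysis: bounding, uniformly in the perturbation time $s$ and with high probability, the cumulative downstream effect of a single change $W_s \to W_s'$. Conditions~(i)--(iii) are engineered precisely for this purpose: (i) and (ii) jointly confine the disagreement to a slowly-growing vertex set, while (iii) controls the probability mass assigned to that set so that each subsequent coupling step has an $O(1/t)$ chance of introducing new disagreement, giving an $O(\log n)$ expected total mismatch. Making these heuristics quantitative via a careful union bound over times and vertices, and combining with Step~1 by a final union bound over the trajectory and probe-point randomness, completes the proof.
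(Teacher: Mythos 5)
The paper's own proof of Theorem~\ref{EstimationTheorem} is not spelled out; it is explicitly deferred as being ``along very similar lines'' to Proposition~\ref{dtvConcentrationProposition}. The intended route is therefore: (a) prove per-term concentration of $d_{TV}(\hmu_t, p^{M}_t)$ via the integral representation (\ref{dtvIntegralRepresentation}) and the windowed martingale bound on $N_t(p,q)$ (Lemmas~\ref{NpqConcentrationLemma} and \ref{MartingaleDifferenceLemma}), then (b) aggregate over probe points by the counting argument used in the ``Concentration of the test statistic'' theorem (define the number $X$ of probe points whose term deviates by more than $c/2$, use Markov to get $\Pr[X > n^{1-\epsilon_1}]\le n^{-\epsilon_2}$, and bound $|S-\E[S]|$ by $n^{1-\epsilon_1}$ plus $cn/2$). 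You propose a genuinely different decomposition: first a conditional Hoeffding bound to strip out the probe-point randomness and reduce to the trajectory-only average $\tilde T(G)$, then a single global Doob martingale (or Efron--Stein) over $W_1,\ldots,W_n$ for $\tilde T$.

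The first half of your plan (conditional Hoeffding) is clean and rigorous, and arguably a tidier decoupling than the paper's counting argument. But the second half has a genuine gap at precisely the point you flag as ``the main obstacle.'' A global martingale over the whole trajectory requires bounding, for each $s$, the total downstream influence of resampling $W_s$ on $\sum_{t>s} d_{TV}(\hmu_t,p^M_t)$, and conditions (i)--(iii) do not obviously deliver the $O(\mathrm{polylog}(n))$ increment bound you assert. Condition (iii) is a tail bound on the expected number of vertices at a given probability level (a ``degree-sequence'' decay), not a bound on the mass placed on a disagreement set under a coupling, and condition (ii) bounds per-step $Y$-changes within a single trajectory, not the growth rate of the symmetric difference between two coupled trajectories. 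Your ``$O(1/t)$ chance of new disagreement at each step'' heuristic is specific to preferential-attachment-like dynamics and is not a consequence of $\CC$'s axioms as stated. Moreover, Azuma needs deterministic increment bounds, so conditioning on a high-probability event where increments are small requires extra work (e.g.\ McDiarmid with exceptional sets), and the Efron--Stein fallback still requires the same cascade to be controlled in second moment. The paper's route avoids this global cascade entirely by (i) localizing the martingale to a window of length $C(n)$ for a \emph{single} $t$, where Lemma~\ref{MartingaleDifferenceLemma} exploits the coupling and conditional-independence structure to get a $2\Delta$ increment bound, and (ii) translating TV-concentration of a single term into concentration of the counting function $N_t(p,q)$ rather than of $\tilde T$ directly. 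To make your approach go through you would need to prove a quantitative coupling contraction lemma for models in $\CC$ (that the symmetric-difference of the two trajectories stays $O(\mathrm{polylog}(n))$ w.h.p.\ uniformly over the perturbation time $s$), which is a substantial missing ingredient not supplied by Definition~\ref{ModelClassDefinition}.
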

This theorem implies that one may estimate the non-stationary
sampling radius of $M$ with high probability up to an arbitrarily small 
multiplicative error by taking a single sample trajectory and computing
$S_M$ on it.

\subsection{Running time and sample complexity upper bounds}
Here we consider the running time of the calculation of the
proposed test.  We stress that the focus of this paper is not on algorithmic
issues, but rather on conditions under which the error probability of the
test can be shown to decay to $0$.  
However, the running time is polynomial
in all relevant parameters, provided that the conditional distributions of $M_0$
can be computed in polynomial time.
\begin{theorem}[Running time of the proposed test]
    Suppose that the probability distribution $p^{M_0}_{t}$ can be computed 
    in time $X(t)$.  Then a naive algorithm for computing $S_{M_0}(G_1, ..., G_n)$ 
    works in time $\Theta(n X(n) + n^2)$.
\end{theorem}
\begin{proof}
    A simple way to perform the computation is to compute each term of
    the sum defining the test statistic independently of any other one.
    There are $M(n) = \Theta(n)$ such terms, and, for each term, computing
    the total variation distance requires at most $X(n) + \Theta(n)$ time.
    The result is that the running time is at most $O(n X(n) + n^2)$.
\end{proof}
As an example, computing the conditional distribution at each timestep $t$
takes time $X(t) = O(t)$ under uniform and preferential attachment models.
Therefore, the running time of the above algorithm for such models is $O(n^2)$.

Regarding sample complexity, we may consider a \emph{sample} to correspond to
a graph at any timestep that we use to compute the test statistic.  Since the
intervals corresponding to different probe points can overlap substantially,
graphs corresponding to certain timesteps may appear several times in the
test statistic calculation.  However, these only count as one sample, and so
we have a sample complexity of at most $O(n)$ (it is likely that this can be
improved).  We stress that our theorems
remain nontrivial and valuable, because they show conditions under which the probability
of error decays to $0$ -- a priori, it need not be the case that this happens,
even with full information about the graph trajectory.

\section{Experimental results}
We empirically investigate the non-stationary sampling radius for 
a certain subset of models.  Since this quantity is crucial for 
our theoretical guarantees,
it is of interest to build intuition about it.

We consider a mixture model interpolating between uniform and preferential
attachment in the case $m=1$.  Namely, at each step, the conditional probability
of vertex $v \in [t-1]$ is given by $\pi_{t,v} = \frac{\beta}{t-1} + (1-\beta)\frac{\deg_{t-1}(v)}{2(t-1)}$, for a parameter $\beta \in [0, 1]$.
Note that $\beta = 0$ corresponds to preferential attachment, while $\beta=1$
corresponds to uniform.

We plotted estimates for the non-stationary sampling radius of several models
as $\beta$ ranges from $0$ to $1$.  The result is in 
Figure~\ref{fig:NonstationarySamplingFig}.  We see a clear linear trend,
with the quantity minimized at $\beta=0$ (preferential attachment).  On
this basis, and given the close relation between the empirical distribution
coming from non-stationary sampling and the degrees of vertices, we have the following conjecture.
\begin{conjecture}
    The model in $\CC$ with the minimum non-stationary sampling 
    radius is preferential attachment. 
\end{conjecture}

\begin{figure}[!htbp]
    \centering
    \includegraphics[scale=0.69]{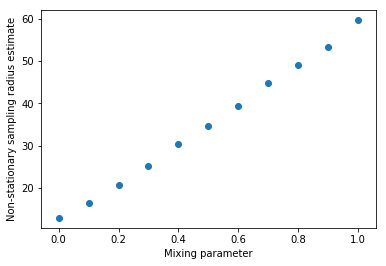}
    \caption{Plot of empirical estimates of $\E_{M}[S_M]$
    for mixtures of uniform and preferential attachment
    with various values of the mixing parameter.  Preferential 
    attachment corresponds to mixing parameter $\beta=0$, while 
    uniform corresponds
    to $\beta=1.0$.  Here, $n=90$. The average is
    over $40$ samples.}
    \label{fig:NonstationarySamplingFig}
\end{figure}

We additionally have results applying our
test to the case of distinguishing uniform attachment (playing
the role of $M_0$) from preferential attachment.  See Figure~\ref{fig:ua-vs-pa}.  The
precision jumps significantly at $n=10$.
\begin{figure}[!htbp]
    \centering
    \includegraphics[scale=0.69]{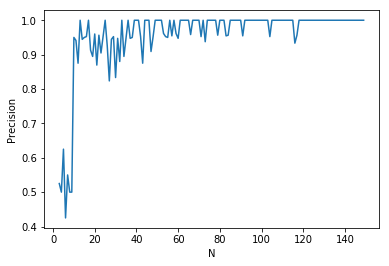}
    \caption{Precision of our test distinguishing uniform
    attachment from preferential as $n$ increases to $150$ (here, precision is the fraction of samples declared to be from preferential attachment, normalized by the number that, indeed, are).  The fraction of trials in which preferential attachment was correctly identified (i.e., the recall) was always $1$.
    For each $n$, $40$ samples
    were taken, with each sample coming from uniform attachment
    with probability $1/2$.}
    \label{fig:ua-vs-pa}
\end{figure}

\FloatBarrier
\bibliographystyle{plain}
\bibliography{dynamic-networks-bib.bib}
\appendix
\section{Proofs}
\label{ProofAppendix}

In this section we prove Theorem~\ref{MainTheorem} and 
Theorem~\ref{EstimationTheorem}.

A roadmap of the proof of Theorem~\ref{MainTheorem} is as follows: we show that, regardless
of which of $M_0$ or $M_1$ generated the observed trajectory, the test statistic $S$ is
well-concentrated around its mean (in both cases, this is typically $\Theta(n)$).  
Furthermore, the hypothesis (\ref{MetricAssumption}) allows us to conclude that 
$\E_{M_0}[S]$ and $\E_{M_1}[S]$ are well-separated.  Thus, we are able to find an appropriate
decision boundary for our test.  The concentration of the test statistic boils down to
concentration of each of its terms, which we are able to establish by rewriting them
as an appropriate double integral in terms of a counting function ($N_t(p, q)$ below)
which is analogous to the degree sequence in preferential attachment.  We are then able
to adapt techniques used to establish concentration of the degree sequence to complete
the proof.  The case for general $m \geq 1$ (where we recall that $m$ is
an upper bound on the number of edges that may be added in a given timestep)
is a corollary of the case for $m=1$, and so below we handle the case of
$m=1$.  This simplifies the notation somewhat: updates are now single vertices, and so we can use $p_{t,v}$ in place of $\pi_{t,v}$, etc.

\subsection{Proof of Theorem~\ref{MainTheorem}}
\label{MainTheoremProof}
The analysis of the test boils down to showing concentration of the
test statistic when the observed graph trajectory is from either $M_0$
or $M_1$.  We will start by proving the following.

\begin{proposition}[Concentration of $d_{TV}(\hmu_{t}, p_{t})$]
    We have that for any models $M_0, M_1$ as in the setting of
    Theorem~\ref{MainTheorem}, there is some $C > 0$ such that 
    for any sufficiently small $\epsilon_1 > 0$, there exists $\epsilon_2 > 0$
    for which
    \begin{align}
        \Pr_{M_1}[ |d_{TV}(\hmu_{t}, p^{M_0}_{t}) - \E_{M_1}[d_{TV}(\hmu_{t}, p^{M_0}_{t})] | > t^{-\epsilon_1} ]
        \leq Ct^{-\epsilon_2}.
    \end{align}
    \label{dtvConcentrationProposition}
\end{proposition}
In order to show Proposition~\ref{dtvConcentrationProposition}, 
we rewrite the total variation
distance as follows: in general, consider discrete random variables $X$ and $Y$.  Then 
\begin{align}
    d_{TV}(X, Y)
    = \frac{1}{2}\int_{0}^1 \int_{0}^1 N(p, q) |p-q| ~dq~dp,
    \label{dtvIntegralRepresentation}
\end{align}
where $N(p, q)$ is the number of elements $x$ in the common domain of $X, Y$
for which $\Pr[X = x] = p$ and $\Pr[Y=x] = q$, and \emph{the integrals are with
respect to discrete (Dirac) measures supported on the set of values for
which $N(p, q)$ can possibly be nonzero}.
We see, then, that concentration
of $d_{TV}(\hmu_{t}, p^{M_0}_{t})$ will follow from concentration
of $N_t(p, q)$ (now parameterized by $t$), where, in our case, the random variables in question are
distributed according to $\hmu_{t}$ (which will correspond to the $p$ integrating 
variable) and $p^{M_0}_{t} = p_{t}$ (which will correspond to the $q$ integrating
variable).  More explicitly, $N_t(p, q)$ is the number of vertices $v$ in the interval
$[t-1]$ whose empirical probability $\hmu_{t,v}$ is equal to $p$ and whose 
conditional probability at time $t$ according to model $M_0$ is equal to $q$.  Note
that $N_t(p, q)$ is a random variable, since it depends on $\hmu_t$, which itself
depends on the observed trajectory.
It is interesting to observe at this point that $N_t(p, q)$ is closely related, in
the preferential attachment case, to the degree distribution, so ideas used in proving concentration
of the degree sequence in that model can be generalized to prove concentration for $N_t(p, q)$.
The next lemma gives the necessary concentration result.
\begin{lemma}[Concentration for $N_t(p, q)$]
    Consider a model $M_1 \in \CC$.  Then 
    \begin{align}
        \Pr_{M_1}[ |N_t(p, q) - \E_{M_1}[N_t(p, q)]| \geq x]
        \leq 2\exp\left( -\frac{x^2}{c\cdot C(n)}\right),
    \end{align}
    for arbitrary $p, q \in [0, 1]$, some positive constant $c$, and 
    we recall that $C(n)=\Theta(n)$.
    \label{NpqConcentrationLemma}
\end{lemma}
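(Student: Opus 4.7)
The plan is to establish the concentration claim by setting up a Doob martingale over the random trajectory under $M_1$, bounding its differences uniformly, and invoking the Azuma--Hoeffding inequality; the overall architecture mirrors and slightly generalizes the standard technique for proving concentration of the degree sequence in preferential attachment.

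First, I would write
$$
    N_t(p, q) = \sum_{v \in [t-1]} \mathbf{1}[\hmu_{t,v} = p]\cdot\mathbf{1}[p^{M_0}_{t,v} = q],
$$
a deterministic functional of the trajectory $G_1, \ldots, G_{t+C(n)-1}$, and consider the Doob martingale $Z_s = \E_{M_1}[N_t(p,q) \mid \mathcal{F}_s]$ with $\mathcal{F}_s = \sigma(G_1, \ldots, G_s)$ and $Z_0 = \E_{M_1}[N_t(p,q)]$. Since $t = O(n)$ and $C(n) = \Theta(n)$, the martingale has length $O(C(n))$, matching the $C(n)$ in the exponent of the target bound.

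Next, I would bound each difference $|Z_{s+1} - Z_s|$ uniformly by a constant. A modification of the single step at time $s$ has two kinds of effect on $N_t(p,q)$: (a) if $s \geq t$, it alters $\hmu_{t,\cdot}$ directly by changing at most two numerator counts $\nu_{t,v} = |\{h \in [t, t+C(n)) : h \to v\}|$ and the denominator $D_t = |\{h \in [t, t+C(n)) : h \to [t-1]\}|$ each by at most $\pm 1$; (b) if $s < t$, it perturbs $G_{t-1}$ and, via the $M_1$ dynamics, every subsequent graph, and hence both the $p^{M_0}_{t,v}$ values and the downstream window. Assumption (ii) guarantees that at each later step only $O(\Delta)$ of the sufficient statistics $Y_{r,v}$ can change, so together with the Markov property (i) the indirect perturbation in case (b) can be coupled across two trajectories and bounded in expectation by an $O(1)$ quantity, as in the usual preferential-attachment concentration argument.

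The main obstacle is the \emph{global} effect of a perturbation of the denominator $D_t$: a $\pm 1$ shift in $D_t$ rescales every $\hmu_{t,v} = \nu_{t,v}/D_t$ simultaneously and could in principle move an unbounded number of vertices across the atom $\{\hmu_{t,v} = p\}$, defeating a naive bounded-difference bound. I would dispose of this by observing that $D_t$ is itself a sum of $C(n)$ conditional indicators whose own Doob differences are $\pm 1$, and so is concentrated on scale $\sqrt{C(n)\log n}$ around $\E_{M_1}[D_t]$. Conditioning on the high-probability event that $D_t$ lies in a short window around its mean, the decay condition (iii) on the empirical-probability profile upper bounds by a constant the number of vertices simultaneously on or near the atom $\{\hmu_{t,v} = p\}$ after a unit rescaling, restoring the $O(1)$ difference bound. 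A conditional application of Azuma--Hoeffding then yields
$$
    \Pr_{M_1}[\,|N_t(p,q) - \E_{M_1}[N_t(p,q)]| \geq x\,] \leq 2\exp\!\left(-\frac{x^2}{c\,C(n)}\right),
$$
with the exceptional event for $D_t$ absorbed into the constant $c$.
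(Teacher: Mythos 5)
Your proposal uses the paper's overall framework---a Doob martingale for $N_t(p,q)$, a bounded-differences estimate, and Azuma--Hoeffding---but the execution diverges in two places. First, the paper's martingale runs only over the sampling window, $Z_s = \E_{M_1}[N_t(p,q)\mid G_{s-1}]$ for $s \in \{t+1,\ldots,t+C(n)\}$; this is exactly what produces $C(n)$ in the exponent, and it makes the concentration implicitly conditional on $G_t$. You instead run the martingale from $Z_0$, which obligates you to bound increments at steps $s<t$ as well; assumptions (i)--(iii) do not obviously control how an early perturbation compounds as it propagates to the window, and the paper never enters this regime. The paper's actual bound on each increment (Lemma~\ref{MartingaleDifferenceLemma}) is obtained by introducing a coupled shadow chain $M_1'$ that agrees with $M_1$ up to the current step and then evolves independently, rewriting $Z_s-Z_{s-1}$ as a sum over vertices of probability differences between the two chains, and invoking (i)--(ii) to conclude that only the at most $2\Delta$ vertices whose auxiliary statistic $Y_{s,\cdot}$ changes can contribute a nonzero term. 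Your sketch gestures at this coupling but does not set up the identity on which the paper's $2\Delta$ bound rests.

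Second---and this is the substantive gap---your treatment of the shared denominator $D_t$ does not go through. Assumption (iii) bounds the expected number of vertices with a given value of the \emph{model} conditional probability $p_{t,v}$; it says nothing directly about the atoms of the \emph{empirical} frequency $\hmu_{t,v}$, and it is only an expectation bound, not the high-probability pointwise control your step needs. Even granting a transfer, the number of vertices that can sit on the atom $\{\hmu_{t,v}=p\}$ after a unit shift of $D_t$ is of order $1/p$ (since $\sum_v \nu_{t,v}\le D_t$), not an absolute constant; the Azuma constant $c$ you would obtain therefore depends on $p$, whereas the lemma asserts a bound uniform over $p,q\in[0,1]$. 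To your credit, you put your finger on a real soft spot: the paper's proof of Lemma~\ref{MartingaleDifferenceLemma} argues that for $v\notin W\cup W'$ the probability difference vanishes because ``future vertex choices relating to $v$'' are conditionally independent, but the event $\{\hmu_{t,v}=p\}$ depends on $D_t$, which aggregates choices for all of $[t-1]$, so this step is itself not fully justified. The concern is legitimate; the repair you propose does not close it.
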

To prove this, we will use the method of bounded differences.  Define
the martingale $Z_s$, for $s  \in \{t, t+1, ..., t+C(n)\}$, as
\begin{align}
    Z_s = \E_{M_1}[ N_t(p, q) | G_{s}].
\end{align}
The next lemma establishes the necessary bound on the martingale
differences.
\begin{lemma}[Martingale difference bound]
    Let $t \in [n]$, and consider the setting of Proposition~\ref{dtvConcentrationProposition}.
    Then for all $s\in [t+1, ..., t+C(n)]$,
    \begin{align}
        |Z_s - Z_{s-1}| \leq 2\Delta,
    \end{align}
    where $\Delta$ is the constant in the specification of $\CC$ in Definition~\ref{ModelClassDefinition}.
    \label{MartingaleDifferenceLemma}
\end{lemma}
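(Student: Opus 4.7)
My plan is to apply the standard Doob--martingale coupling argument. For any two graphs $g_1, g_2$ extending $G_{s-2}$ by a single attachment step at time $s-1$, I would couple the two conditional trajectories of $M_1$ starting from $g_1$ and $g_2$, and argue that under the coupling $|N_t(p,q)^{(1)} - N_t(p,q)^{(2)}| \leq 2\Delta$ almost surely. The bound $|Z_s - Z_{s-1}| \leq 2\Delta$ then follows from $|\E[X] - \E[Y]| \leq \E|X - Y|$ applied to the coupled pair and the essential--supremum characterization of the Doob martingale increment.

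The coupling is built step by step using the Markov structure imposed by constraint~(i). I would track the \emph{discrepancy set} $D_h := \{v : Y_{h,v}^{(1)} \neq Y_{h,v}^{(2)}\}$. Since $g_1$ and $g_2$ share the state $G_{s-2}$ and differ only by a single attachment, constraint~(ii) applied separately to each copy gives $|D_s| \leq 2\Delta$ at initialization. For subsequent $h \in [s, t + C(n) - 1]$, constraint~(i) says that the attachment distributions of the two copies at time $h$ agree on $D_h^c$, so I would use a maximal coupling that matches the two attachments on $D_h^c$ whenever possible. Under matched attachments the two copies evolve in lockstep and $D_h$ does not grow; under mismatched attachments, both endpoints lie in $D_h$, so no new vertex enters the discrepancy set (other than possibly the newly arriving vertex $h$ itself).

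I would then conclude from the decomposition $N_t(p,q) = \sum_{v \in [t-1]} \mathbf{1}[\hmu_{t,v}=p,\, p_{t,v}^{M_0}=q]$. Since $p_{t,v}^{M_0}$ is determined by the fixed $G_{t-1}$, the difference $|N_t(p,q)^{(1)} - N_t(p,q)^{(2)}|$ is supported on vertices $v \in [t-1]$ whose window--attachment counts (and hence $\hmu_{t,v}$) differ between the two coupled trajectories. By the coupling invariant on $D_h$, that set has size at most $2\Delta$, delivering the claimed almost--sure bound and hence the martingale difference bound.

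The main obstacle is maintaining the invariant on $D_h$ across the full $C(n)$--step window: naively, constraint~(ii) permits $\Delta$ fresh $Y$-changes per step in each copy, which would let $|D_h|$ grow linearly in $C(n)$ and produce a much weaker bound. The resolution, which is the technical heart of the proof, is that the maximal coupling above forces the fresh $Y$-changes in the two copies to occur on the \emph{same} vertices whenever their attachments are matched; only the rare mismatched attachments can enlarge $D_h$, and they do so only inside (a bounded neighborhood of) the existing discrepancy. Pinning down this invariant rigorously, leveraging both constraints~(i) and~(ii) of the model class $\CC$, is the delicate part of the write--up.
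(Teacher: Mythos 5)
Your approach is genuinely different from the paper's, and it hits a real obstruction that the paper's own argument avoids. The paper does \emph{not} run a step-by-step maximal coupling across the sampling window. Instead, it constructs two processes $M_1$ and $M_1'$ that agree exactly up to time $s-1$ and then evolve \emph{independently} from time $s$ onward. Writing $N_t(p,q) = \sum_{v} \mathbf{1}[\hmu_{t,v}=p,\,p_{t,v}=q]$, the paper expresses $Z_s - Z_{s-1}$ as the sum over $v$ of the difference between the conditional probabilities $\Pr_{M_1}[\cdot\mid G_s]$ and $\Pr_{M_1'}[\cdot\mid G_s,G_s']$. It then argues that for $v$ outside $W \cup W'$ --- where $W$ (resp.\ $W'$) is the set of vertices whose $Y$-value changes at step $s$ in the $M_1$ (resp.\ $M_1'$) trajectory, each of size $\leq \Delta$ by constraint (ii) --- these two conditional distributions of $\hmu_{t,v}$ are \emph{equal}, so the per-vertex difference vanishes. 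There is never a discrepancy set that needs to be propagated across $C(n)$ steps: the ``discrepancy'' is pinned at a single time slice $s$, and the argument is about equality of marginals, not about a coupled pathwise bound.

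The gap in your version is precisely the invariant you flag as ``the technical heart.'' Constraint (ii) is a statement about each trajectory \emph{separately}: at most $\Delta$ vertices have $Y_{h,v} \neq Y_{h-1,v}$. It says nothing about which vertices those are, and in particular does not force the $\Delta$ vertices with $Y$-changes in copy one to coincide with those in copy two, even when the step-$h$ attachments are matched. Your claim that ``under matched attachments the two copies evolve in lockstep'' therefore does not follow from constraints (i)--(ii) --- the two graphs $G_h^{(1)}, G_h^{(2)}$ still differ on $D_h$, and that residual difference is allowed to perturb which vertices receive $Y$-updates at step $h+1$. Likewise, ``mismatched attachments only enlarge $D_h$ inside a bounded neighborhood of the existing discrepancy'' appeals to a locality assumption that $\CC$ does not impose, and even if it held it would not by itself cap $|D_h|$: a bounded per-step growth compounded over $C(n) = \Theta(n)$ steps gives a linear bound, not $2\Delta$. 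So the step-by-step coupling route, as described, cannot close. If you want to pursue it, you would need to add to the model class something like a per-vertex Markov/locality assumption (that the evolution of $Y_{\cdot,v}$ is determined by $Y_{\cdot,v}$ itself together with the attachment events at $v$), at which point you would in effect be supplying the ingredient the paper's distributional-equality step implicitly uses.
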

\begin{proof}
    We consider the following pair of dynamic mechanisms: $M_1$ and $M_1'$,
    which are coupled as follows: at times $1, ..., s-1$, they are equal.  At times $s$ and beyond,
    they evolve independently but according to $M_1$.
    We will show that we can rewrite the martingale differences in terms of $M_1$ and $M_1'$ as follows:
    \begin{align}
        Z_s - Z_{s-1}
        = \sum_{v=1}^t ( \Pr_{M_1}[\hmu_{t,v}=p, p_{t,v}=q ~|~ G_{s} ] 
                        - \Pr_{M_1'}[\hmu_{t,v}=p, p_{t,v}=q ~|~ G_{s} ]).
        \label{MartingaleDiffIdentity}
    \end{align}
    We will let $G_s$ be the graph at time $s$ according to $M_1$ and $G_s'$ be the 
    graph at time $s$ according to process $M_1'$.
    To prove the identity, by linearity of expectation, we have
    \begin{align}
        Z_s = \sum_{v=1}^t \Pr_{M_1}[ \hmu_{t,v}=p, p_{t,v}=q ~|~ G_{s}].
    \end{align}
    An expression for $Z_{s-1}$ can similarly be written, with the only change
    being in the conditioning: $G_{s-1}$ instead of $G_s$.  Now, conditioning
    on $G_{s}$ has the same effect on random variables from $M_1'$ as conditioning
    on $G_{s-1}$, since the choices made by $M_1'$ are independent of $G_s, G_{s+1}, ...$.
    This gives us (\ref{MartingaleDiffIdentity}).

    It remains to upper bound the 
    right-hand side of 
    (\ref{MartingaleDiffIdentity}).
    We note that
    \begin{align}
        \Pr_{M_1'}[ \hmu_{t,v}=p, p_{t,v}=q ~|~ G_{s}]
        = \E_{M_1'}[ \Pr[ \hmu_{t,v}=p, p_{t,v}=q ~|~ G_{s}, G'_s]].
    \end{align}
    I.e., we have conditioned on the choice at time $s$ according to $M_1'$.  We thus have
    \begin{align}
        &\Pr_{M_1}[ \hmu_{t,v}=p, p_{t,v}=q ~|~ G_{s}] - \Pr_{M_1'}[\hmu_{t,v}=p, p_{t,v}=q ~|~ G_s] \\
        &= \E_{M_1'}[  \Pr_{M_1}[ \hmu_{t,v}=p, p_{t,v}=q ~|~ G_{s}] - \Pr_{M_1'}[\hmu_{t,v}=p, p_{t,v}=q ~|~ G_s, G_s'] ].
    \end{align}
    The quantity inside the expectation measures the difference in probabilities
    after the choice at time $s$ is made according to each mechanism.  
    Using the conditional independence properties of the model class, at most $2\Delta$
    vertices $v$ are such that this difference is nonzero.  
    More precisely, let $W, W'$ be the sets of vertices $v$ in the two respective 
    trajectories such that $Y_{s,v} \neq Y_{s-1,v}$.  Since all future vertex choices
    relating to vertices not in these sets are conditionally independent of them given
    $G_s, G_s'$, the above difference is only nonzero for vertices in $W$ and $W'$.  By
    the model assumptions, their cardinalities are both at most $\Delta$.
    This completes the proof.
\end{proof}


    With Lemma~\ref{MartingaleDifferenceLemma} in hand, we apply the
    Azuma-Hoeffding inequality to prove Lemma~\ref{NpqConcentrationLemma}.
    The probability upper bound in Lemma~\ref{NpqConcentrationLemma} is given
    by
    \begin{align*}
        2\exp\left( -\frac{x^2}{\sum_{s=t+1}^{t+C(n)} (2\Delta)^2} \right)
        = 2\exp\left( -\frac{x^2}{(2\Delta)^2 C(n)} \right).
    \end{align*}

We are finally ready to prove 
Proposition~\ref{dtvConcentrationProposition}.
\begin{proof}[Proof of Proposition~\ref{dtvConcentrationProposition}]
    We will bound the integral representation 
    (\ref{dtvIntegralRepresentation}) of the total variation distance.
    
    We will start by ruling out large values of $q$.  In particular,
    set $q = \Omega(t^\delta)$, where $\delta = \frac{-1-\gamma-\epsilon}{\gamma}$, for an arbitrarily small positive
    $\epsilon$.  That is, we wish to upper bound the integral
    \begin{align}
        \left| \int_{0}^{1} \int_{q_0}^{1} N_t(p, q) - \E_{M_1}[N_{t}(p, q)] ~dq~dp \right|.
    \end{align}
    By (\ref{ImportantInequality}), we have
    \begin{align}
        \int_{q_0}^1 \E_{M_1}[N_t(p, q)] ~dq
        \leq \int_{q_0}^{1} \E_{M_1}[N_{M_0,t}(q)] ~dq
        = O((qt)^{1+\gamma}),
        \label{ApplicationOfImportantInequality}
    \end{align}
    where we recall that $\gamma < -2$.
    Similarly, by Markov's inequality, we have that for any $x > 0$,
    \begin{align}
        \Pr[ \int_{q=q_0}^1 N_t(p, q)~dq > x]
        \leq \Pr[\int_{q=q_0}^1 N_{M_0,t}(q)~dq > x]
        \leq \frac{\int_{q=q_0}^1 \E_{M_1}[N_{M_0,t}(q)]~dq}{x}
        \leq \frac{O((qt)^{1+\gamma})}{x}.
    \end{align}
    In particular, we will choose $x = \Theta((qt)^{1+\gamma+\epsilon})$,
    which has the following consequence:
    \begin{align}
        \Pr[ \int_{q=q_0}^1 N_t(p, q)~dq > x]
        = O((qt)^{-\epsilon}), 
    \end{align}
    and since $q=\Theta(t^{-\delta})$ with $\delta > -1$, this tends
    to $0$ polynomially fast in $t$.
    All of this has the consequence that 
    \begin{align}
        \Pr[ |\int_{0}^{1} \int_{q_0}^{1} N_t(p, q) - \E_{M_1}[N_{t}(p, q)] ~dq~dp | > (qt)^{1+\gamma+\epsilon} ]
        \leq O((qt)^{-\epsilon}).
    \end{align}
    
    We can thus ignore the range where $q = \Omega(t^{\frac{-1-\gamma-\epsilon}{\gamma}})$ for arbitrary fixed positive $\epsilon$.
    Now we focus on the small $q$ range: 
    $\delta \leq \frac{-1 - \gamma - \epsilon}{\gamma}$.  
    
    We will split the integral as follows:
    \begin{align}
        &\int_{0}^{t^{\frac{-1-\gamma-\epsilon}{\gamma}}} \int_{0}^1 |N_t(p, q) - \E_{M_1}[N_t(p, q)]| |p-q| ~dp~dq \\
        &= \int_{0}^{ t^{\frac{-1-\gamma-\epsilon}{\gamma}}} \int_{0}^{p_0}|N_t(p, q) - \E_{M_1}[N_t(p, q)]| |p-q| ~dp~dq \\
        &~~~+ \int_{0}^{ t^{\frac{-1-\gamma-\epsilon}{\gamma}} } \int_{p_0}^{1}|N_t(p, q) - \E_{M_1}[N_t(p, q)]| |p-q| ~dp~dq,
    \end{align}
    where $p_0 = (1+c)t^{\frac{-1-\gamma-\epsilon}{\gamma}}$, with $c$ some small positive constant.
    The first integral on the right-hand side can be handled by
    noticing that $|p-q| = O(t^{\frac{-1-\gamma-\epsilon}{\gamma}})$ 
    throughout, 
    and by Lemma~\ref{NpqConcentrationLemma} (with $x=t^{1/2+\eps_1}$), with probability
    exponentially close to $1$ (with respect to $t$), we have
    \begin{align}
        |N_t(p, q) - \E_{M_1}[N_t(p, q)]| \leq O(t^{1/2+\epsilon_1}).
    \end{align}
    Thus, the entire first integral is 
    $\leq O(t^{1/2 +\epsilon_1 - \frac{1 + \gamma + \epsilon}{\gamma}})$,
    which tends to $0$ because $\gamma < -2$.
    
    To handle the second integral (where $|p-q|$ may be $\Theta(1)$),
    we notice that $|p-q| \leq 1$, and then, with high probability
    (by Markov's inequality), $|N_t(p, q) - \E_{M_1}[N_t(p, q)]| \leq \E_{M_1}[N_t(p, q)]t^{\epsilon_2}$.  This last inequality can be seen more precisely as follows:
    since $N_t(p, q)$ is a cardinality, it is lower bounded by $0$.  Thus,
    if $N_t(p, q) < \E_{M_1}[N_t(p, q)]$, the difference between the two must
    be at most $\E_{M_1}[N_t(p, q)]$, which is certainly less than $\E_{M_1}[N_t(p, q)]t^{\epsilon_2}$.  On the other hand, we can see by Markov's inequality that
    \begin{align*}
        \Pr[N_t(p, q) > \E_{M_1}[N_t(p, q)] t^{\epsilon_2}]
        \leq \frac{ \E_{M_1}[N_t(p, q)]}   { \E_{M_1}[N_t(p, q)] t^{\epsilon_2} }
        = t^{-\epsilon_2}.
    \end{align*}
    
    We can further upper bound by $\E_{M_1}[N_t(p, q)] \leq \E_{M_1}[N_{\hmu_t}(p)]$ 
    (where $N_{\hmu_t}(p)$ is the number of vertices having conditional 
    probability $p$ according to the distribution $\hmu_{t}$), and since 
    $p \geq p_0$, 
    $\E_{M_1}[N_{\hmu_t}(p)] = O(t^{-\epsilon_3})$, which is a consequence of
    the model class definition.  In particular, the expected number of
    vertices $v$ with $p^{M_1}_{t,v} = p$ is at most $Ct^{-\epsilon}$, because of
    (\ref{ImportantInequality}) and the fact that $p \geq p_0$ (in exactly
    the same way that we concluded (\ref{ApplicationOfImportantInequality})).  This can be
    used to show that $\E_{M_1}[N_{\hmu_t}(p)] = O(t^{-\epsilon_3})$, using a proof analogous
    to the one that upper bounds the maximum
    degree for preferential attachment graphs.
    
    Then, we are left with the
    double integral
    \begin{align}
        t^{-\epsilon_3 + \epsilon_2} \cdot \int_{0}^{t^{ \frac{-1-\gamma-\epsilon}{\gamma}  }} \int_{0}^1 1 ~dq~dp 
        = O(t^{ \frac{-1-\gamma-\epsilon}{\gamma} + \epsilon_2}).
    \end{align}
    Recalling that $\gamma < -2$, we can make the exponent negative by
    setting $\epsilon$ and $\epsilon_2$ to be small enough, so the 
    right-hand side of the above expression tends to $0$.  More precisely,
    since $\epsilon > 0$, the exponent is at most $-\frac{1}{\gamma} - 1 + \epsilon_2$.  Since $\gamma < -2$, we have $-\frac{1}{\gamma} < 1/2$, so
    the exponent is less than $-1/2 + \epsilon_2$.  Thus, if we choose
    $\epsilon_2 < 1/2$, the resulting exponent is negative.
    This yields the claimed result.
\end{proof}

Now, we use Proposition~\ref{dtvConcentrationProposition} to complete the
proof of Theorem~\ref{MainTheorem}.  We shall show that the test
statistic is well concentrated.  In particular, we want to show the following.

\begin{theorem}[Concentration of the test statistic]
    We have, for any models $M_0, M_1 \in \CC$ (or $M_0$, $M_1$ as in
    Definition~\ref{GeneralModelClassDefinition}),
    and for any $b \in \{0, 1\}$ and any small positive constant $c$, that 
    \begin{align}
        \Pr_{M_b}[ |S_{M_0} - \E_{M_b}[S_{M_0}]| > cn] \leq O(n^{-\epsilon_1}),
    \end{align}
    for some $\epsilon_1$.
    \label{TestStatisticConcentrationTheorem}
\end{theorem}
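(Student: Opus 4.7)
The plan is to decompose the test statistic into its $M(n)=\Theta(n)$ summands and convert the per-probe-point concentration of Proposition~\ref{dtvConcentrationProposition} into a single concentration bound for the entire sum via Markov's inequality on the mean absolute deviation. Writing $S_j := d_{TV}(\hmu_{r_j}, p^{M_0}_{r_j})$ so that $S_{M_0} = \sum_{j=1}^{M(n)} S_j$, the triangle inequality gives
\begin{align}
|S_{M_0} - \E_{M_b}[S_{M_0}]| \leq \sum_{j=1}^{M(n)} |S_j - \E_{M_b}[S_j]|,
\end{align}
and since both $M_0$ and $M_1$ lie in $\CC$, Proposition~\ref{dtvConcentrationProposition} applies under $M_b$ for every $j$.

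The key estimate I would establish first is a mean-absolute-deviation bound $\E_{M_b}[|S_j - \E_{M_b}[S_j]|] = O(r_j^{-\epsilon})$ for some $\epsilon > 0$. This is immediate from Proposition~\ref{dtvConcentrationProposition} by splitting the expectation according to whether the per-term deviation exceeds $r_j^{-\epsilon_1}$: on the good event (probability $1-O(r_j^{-\epsilon_2})$) the deviation contributes at most $r_j^{-\epsilon_1}$, and on the bad event it is bounded trivially by $1$ because $d_{TV}\in[0,1]$, contributing $O(r_j^{-\epsilon_2})$. Summing these bounds across the $\Theta(n)$ probe points, and using that the sampling scheme places all but an $o(1)$ fraction of the $r_j$ at scale $r_j = \Theta(n)$, yields
\begin{align}
\sum_{j=1}^{M(n)} \E_{M_b}[|S_j - \E_{M_b}[S_j]|] = O(n\cdot n^{-\min(\epsilon_1,\epsilon_2)}) = O(n^{1-\delta})
\end{align}
for some $\delta>0$. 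Markov's inequality applied to the non-negative right-hand side of the triangle-inequality bound at threshold $cn$ then gives
\begin{align}
\Pr_{M_b}[|S_{M_0} - \E_{M_b}[S_{M_0}]| > cn] \leq \frac{O(n^{1-\delta})}{cn} = O(n^{-\delta}),
\end{align}
which is the claimed bound with $\epsilon_1 := \delta$.

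The main obstacle is controlling the contribution of probe points $r_j$ in the small-$t$ regime where Proposition~\ref{dtvConcentrationProposition} carries no useful information. I would handle this by splitting the sum at a threshold such as $n^{1-\delta'}$: the small-$r_j$ terms are trivially bounded by $1$ each and are at most $O(n^{1-\delta'}) = o(n)$ in aggregate under uniform (or roughly uniform) sampling of the probe points, while the large-$r_j$ terms are controlled as above. A secondary technicality is commuting the expectation in Proposition~\ref{dtvConcentrationProposition} with the expectation over the probe-point sampling, which is resolved by conditioning on the realization $r_1,\ldots,r_{M(n)}$, applying the argument above pointwise, and integrating out. Note that a sharper, sub-Gaussian bound could in principle be obtained by promoting the bounded-differences argument behind Lemma~\ref{MartingaleDifferenceLemma} to the full statistic $S_{M_0}$, but the polynomial tail obtained here already suffices for Theorem~\ref{MainTheorem}.
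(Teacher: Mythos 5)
Your proposal is correct and takes essentially the same approach as the paper: both circumvent the heavy dependence between the $S_j$ (from overlapping sampling windows) by invoking Proposition~\ref{dtvConcentrationProposition} termwise and then applying Markov's inequality to a derived nonnegative quantity. The paper packages this as counting the number $X$ of ``bad'' probe points with $|S_j - \E_{M_b}[S_j]| > c/2$ and showing $X = O(n^{1-\epsilon_1})$ with high probability, whereas you bound $\E_{M_b}\bigl[\sum_j |S_j - \E_{M_b}[S_j]|\bigr] = O(n^{1-\delta})$ directly; these are interchangeable, and your explicit handling of the small-$r_j$ regime is a minor refinement that the paper leaves implicit.
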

\begin{proof}
    We note that the terms $S_j=d_{TV}(\hmu_{r_j}, p_{r_j}^{M_0})$ of the sum defining $S$ may be heavily 
    dependent, due to the fact that the sampling intervals are likely to 
    overlap heavily.  To circumvent this, define $X$ to be the number of
    probe points $r_j$ for which 
    \begin{align}
        |S_j - \E_{M_b}[S_j]| > c/2,
    \end{align}
    where we recall that $S_j$ is the term in the sum defining $S$ corresponding
    to the $j$th probe point.
    By Markov's inequality, for arbitrarily small $\epsilon_1 > 0$,
    \begin{align}
        \Pr[X > n^{1-\epsilon_1}]
        \leq \frac{\E[X]}{n^{1-\epsilon_1}}.
    \end{align}
    We can calculate $\E[X]$ using linearity of expectation, and we see from
    Proposition~\ref{dtvConcentrationProposition} that each term decays at
    least polynomially fast in $n$ to $0$.  We thus have
    \begin{align}
        \Pr[X > n^{1-\epsilon_1}]
        \leq n^{-\epsilon_2},
    \end{align}
    for some small $\epsilon_1, \epsilon_2 > 0$. 
    Now, if $X \leq n^{1-\epsilon_1}$, 
    then $|S - \E[S]| \leq O(n^{1-\epsilon_1}) + cn/2(1 - O(n^{-\epsilon_1}))$,
    which implies the desired result.
\end{proof}


Now, to finish the analysis of the error of the test, consider first the case
where the sample trajectory comes from $M_0$.  Then with probability
$1 - O(n^{-\epsilon_1})$, we have that $|S - \E_{M_0}[S_{M_0}]| < cn$
for arbitrary fixed $c > 0$,
so that the test correctly outputs $0$.

When the sample trajectory comes from $M_1$, note that for any small enough 
constant $\epsilon_3 > 0$, there is some $\epsilon_1 > 0$ such that with probability 
$1 - O(n^{-\epsilon_1})$, $|S - \E_{M_1}[S]| < \epsilon_3 n$.   
Then, by the condition in the theorem,
\begin{align}
    |\E_{M_1}[S] - \E_{M_0}[S]|
    \geq 
    d_n(M_0, M_1) - \E_{M_0}[S_{M_0}] - \E_{M_1}[S_{M_1}] \geq Dn,
\end{align}
where $D$ is the constant alluded to in the theorem statement.
Then, by the reverse triangle inequality, 
$|S - \E_{M_0}[S]| \geq \left| |\E_{M_0}[S] - \E_{M_1}[S]| - |\E_{M_1}[S] - \E_{M_0}[S]| \right| \geq Dn - \epsilon_3 n$, and for small enough $\epsilon_3$, this is $\geq Dn/2$.
Thus, $|S-\E_{M_0}[S]| > \alpha(n)$ with high probability.

Theorem~\ref{EstimationTheorem} is an immediate invocation of
Theorem~\ref{TestStatisticConcentrationTheorem}.

We note that the proof of concentration for $m \geq 1$ follows by generalizing
Lemma~\ref{NpqConcentrationLemma}.  In particular, this may be done by
induction on $m$, where $m=1$ forms the base case.  We introduce some 
convenient notation: let $N^{(j)}_{t}(p, q)$ denote the number of multisets 
$S$ 
of vertices from $[t-1]$ of cardinality at most $j$ whose empirical probability (according to the choices of the vertices in the interval
$[t, t+C(n)]$) is $p$ and for which $\prod_{v \in S} \pi_{t,v} = q$
(where the product takes into account the number of times each $v$ occurs in $S$).  In particular, $N^{(m)}_t(p, q) = N_t(p, q)$.

Now, we can write $N^{(m)}_t(p, q)$ by considering the conditional probability assigned to the first vertex chosen in teach timestep:
\begin{align}
    N^{(m)}_t(p, q)
    = \int_{p'\geq p} \int_{q' \geq q} N^{(1)}_t(p', q') N^{(m-1)}_t(p/p', q/q') ~dp'~dq'.
\end{align}
As above, we may neglect all but small values of $p'$ and $q'$ (on the order 
of $O(1/\sqrt{t})$), and the concentration of the remaining integral follows 
by induction.




\end{document}